\begin{document}
    
%
\title{Is Complexity an Illusion?}

%
%
\author{Michael Timothy Bennett\inst{1}\\\orcidID{0000-0001-6895-8782} 
}
\authorrunning{Michael Timothy Bennett}
%
\institute{The Australian National University\\
\email{michael.bennett@anu.edu.au}\\
\url{http://www.michaeltimothybennett.com/}}
\maketitle              

\begin{abstract} 
Simplicity is held by many to be the key to general intelligence. Simpler models tend to “generalise”, identifying the cause or generator of data with greater sample efficiency. The implications of the correlation between simplicity and generalisation extend far beyond computer science, addressing questions of physics and even biology. Yet simplicity is a property of form, while generalisation is of function. In interactive settings, any correlation between the two depends on interpretation. In theory there could be no correlation and yet in practice, there is. Previous theoretical work showed generalisation to be a consequence of “weak” constraints implied by function, not form. Experiments demonstrated choosing weak constraints over simple forms yielded a $110-500\%$ improvement in generalisation rate. Here we show that all constraints can take equally simple forms, regardless of weakness. However if forms are spatially extended, then function is represented using a finite subset of forms. If function is represented using a finite subset of forms, then we can force a correlation between simplicity and generalisation by making weak constraints take simple forms. If function is determined by a goal directed process that favours versatility (e.g. natural selection), then efficiency demands weak constraints take simple forms. Complexity has no causal influence on generalisation, but appears to due to confounding. 
\end{abstract}
\keywords{complexity \and weakness \and causality \and AGI \and information theory.}

\section{Introduction}
Complexity is a quality of systems we find difficult to understand. Formal analogues include entropy \cite{sep-information-entropy}, compression \cite{kolmogorov1963,rissanen1978} and even fractal dimension \cite{barnsley1993a}. Physicist Leonard Susskind believes complexity may be the key to a unified theory of physics \cite{gefter2014,susskind2014}. Cyberneticist Francis Heylighen recently argued that goals are attractors of dynamical systems that self organise in complex reaction networks \cite{heylighen2023b}. If complex reaction networks do self organise as he argues \cite{heylighen2008}, then it goes some way towards explaining the origins of goal directed behaviour, and thus life. Finally, many hold that complexity is the key to general intelligence \cite{legg2007}. Language models like GPT-4 amount to compressed representations of human language \cite{deletang2024}. Simpler objects can be compressed to greater extents, because they exhibit self similarity. As a result, some hold that compression is general intelligence, meaning a general reinforcement learning agent like AIXI \cite{hutter2010} can use Solomonoff Induction \cite{solomonoff1978} to maximise expected reward across a wide range of environments.
Yet for all that complexity seems to be at the heart of every matter, it has profound flaws. As a qualitative indicator of how \textit{subjectively} difficult a system might be to understand, it makes perfect sense. It makes far less as an indicator of anything \textit{objective}. Ockham's Razor is the epistemic principle that simpler statements are more likely to hold true. It can be understood as the claim that our subjective perceptions of complexity reflect an objective property of our environment. There is no obvious reason this should be the case, and yet it is \cite{sober2015}. Simpler statements tend to be more accurate representations of reality. The aforementioned Solomonoff Induction formalises Ockham's Razor, meaning AIXI is based on the premise that \textit{simpler} models are more accurate depictions of the environment than complex models of seemingly equal predictive power. AIXI is a superintelligence in the sense that it maximises Legg-Hutter intelligence \cite{legg2007}. However, Jan Leike later showed that any claim regarding AIXI's performance is ``entirely subjective'' \cite{leike2015}. Legg-Hutter intelligence is measured with respect to a fixed Universal Turing Machine (UTM), and AIXI is only optimal if it uses exactly the same UTM. This calls into question the viability of complexity based induction systems in interactive settings. Their performance is subjective, and from a pragmatic standpoint it is only objective performance claims that matter. 
Leike's result suggests there could be no correlation between objective performance and subjective complexity. This concurs with what seems intuitively obvious, that complexity is an aspect of interpretation. Complexity is a measure of \textit{form}, not \textit{function}. So why does the subjective perception of simplicity tend to correlate with objective performance? 

\subsubsection{What exactly is complexity supposed to indicate?: }
As it is used in AGI research \cite{hutter2010,legg2008}, complexity is intended to help us infer the program which generated or \textit{caused} data. If one can identify that which caused past data, then one can ``generalise'' to predict the outcomes in future interactions, to maximise performance \cite{bennett2023c}. 
We are concerned with adaptation or ``the ability to generalise'' \cite{chollet2019}, not any specific circumstance. This is because any system can eventually identify cause given enough data and memory (by simply rote learning every outcome until it has a complete behavioural specification of the causal program). So assuming one \textit{can} correctly infer cause, then we claim that the amount of data one requires to do so is the sole measure of performance\footnote{Using the same dataset, not different datasets which could necessarily imply a different set of sufficient causes and thus affect learning.}. We refer to this as sample efficiency. The more sample efficiently one can infer cause, the greater one's ability to generalise and adapt to any desired end. Thus, we take intelligence to be a measure of the sample efficiency in generalisation.

\subsubsection{Key questions: }
We build upon previous work \cite{bennett2023b,bennett2023c}, in which: 
\begin{enumerate}
    \item Maximising simplicity of policies was proven unnecessary and insufficient to maximise sample efficiency \cite[prop. 3]{bennett2023b}.
    \item Maximising policy ``weakness'' was proven necessary and sufficient to maximise sample efficiency \cite[prop. 1,2]{bennett2023b} and identify cause \cite{bennett2023c}. In experiments, weak policies outperformed simple by $110-500\%$.
\end{enumerate} 
Our purpose here is to extend this work, and to establish:
\begin{enumerate}
    \item Is complexity just an artefact of abstraction?
    \item Why do sample efficiency and simplicity tend to be correlated? 
\end{enumerate}
\subsubsection{Results: }
We begin by presenting a formalism. Our results are only meaningful if one accepts that our formalism is reflective of reality, so we provide an argument to the effect that it is (lest we be accused of straw-manning complexity). \textbf{Second}, we show that the complexity of all behaviours is equal in the absence of an abstraction layer (a general formalisation of any interpreter). In other words, complexity is a subjective ``illusion''. We further show that if the vocabulary is finite then weakness can confound simplicity and sample efficiency. \textbf{Third}, we argue that abstraction is goal directed. If the vocabulary is finite, and tasks uniformly distributed, then weak statements take simple forms. 

\section{The Formalism}
The following definitions are shared with \cite{bennett2024c,bennett2024a}, which apply them to biological and philosophical perspectives.
\begin{definition}[environment]\label{environment}\hphantom{.}
\begin{itemize}{\small
    \item  We assume a set $\Phi$ whose elements we call \textbf{states}.
    \item A \textbf{declarative program} is $f \subseteq \Phi$, and we write $P$ for the set of all declarative programs (the powerset of $\Phi$).
    \item By a \textbf{truth} or \textbf{fact} about a state $\phi$, we mean $f \in P$ such that $\phi \in f$. 
    \item By an \textbf{aspect of a state} $\phi$ we mean a set $l$ of facts about $\phi$ s.t. $\phi \in \bigcap l$. By an \textbf{aspect of the environment} we mean an aspect $l$ of any state, s.t. $\bigcap l \neq \emptyset$. We say an aspect of the environment is \textbf{realised}\footnote{Realised meaning it is made real, or brought into existence.} by state $\phi$ if it is an aspect of $\phi$. 
    }
\end{itemize}

\end{definition}
\begin{definition}[abstraction layer] \label{abstractionlayer}\hphantom{.}
\begin{itemize}{\small 
    \item We single out a subset $\mathfrak{v} \subseteq P$ which we call \textbf{the vocabulary} of an abstraction layer. If $\mathfrak{v} = P$, then we say that there is no abstraction.
    \item $L_\mathfrak{v} = \{ l \subseteq \mathfrak{v} : \bigcap l \neq \emptyset \}$ is a set of aspects in $\mathfrak{v}$. We call $L_\mathfrak{v}$ a formal language, and $l \in L_\mathfrak{v}$ a \textbf{statement}.
    \item We say a statement is \textbf{true} given a state iff it is an aspect realised by that state.  
    \item A \textbf{completion} of a statement $x$ is a statement $y$ which is a superset of $x$. If $y$ is true, then $x$ is true. 
    \item The \textbf{extension of a statement}\footnote{The relation to typical philosophical and linguistic notions of intension and extension is addressed at length in \cite{bennettmaruyama2022a,bennett2022a,bennett2023d}.} $x \in {L_\mathfrak{v}}$ is $E_x = \{y \in {L_\mathfrak{v}} : x \subseteq y\}$. $E_x$ is the set of all completions of $x$. 
    \item The \textbf{extension of a set of statements} $X \subseteq {L_\mathfrak{v}}$ is $E_X = \bigcup\limits_{x \in X} E_x$. 
    \item We say $x$ and $y$ are \textbf{equivalent} iff $E_x = E_y$.
    }
\end{itemize}

\noindent {\normalfont(notation)} $E$ with a subscript is the extension of the subscript\footnote{e.g. $E_l$ is the extension of $l$.}.\\

\noindent {\normalfont(intuitive summary)} $L_\mathfrak{v}$ is everything which can be realised in this abstraction layer. The extension $E_x$ of a statement $x$ is the set of all statements whose existence implies $x$, and so it is like a truth table.\\
\end{definition}

\begin{definition}[{$\mathfrak{v}$}-task]\label{task} For a chosen $\mathfrak{v}$, a task $\alpha$ is a pair $\langle {I}_\alpha, {O}_\alpha \rangle$ where:\begin{itemize}{ \small
    \item ${I}_\alpha \subset L_\mathfrak{v}$ is a set whose elements we call \textbf{inputs} of $\alpha$. 
    \item ${O_\alpha} \subset E_{I_\alpha}$ is a set whose elements we call \textbf{correct outputs} of $\alpha$. 
} 
\end{itemize}
${I_\alpha}$ has the extension $E_{I_\alpha}$ we call \textbf{outputs}, and ${O_\alpha}$ are outputs deemed correct. 
$\Gamma_\mathfrak{v}$ is the set of \textbf{all tasks} given $\mathfrak{v}$.\\

\noindent {\normalfont(generational hierarchy)} A $\mathfrak{v}$-task $\alpha$ is a \textbf{child} of $\mathfrak{v}$-task $\omega$ if ${I}_\alpha \subset {I}_\omega$ and ${O}_\alpha \subseteq {O}_\omega$. This is written as $\alpha \sqsubset \omega$. If $\alpha \sqsubset \omega$ then $\omega$ is then a \textbf{parent} of $\alpha$. $\sqsubset$ implies a generational hierarchy of tasks. 
The level of a task $\alpha$ in this hierarchy is the largest $k$ such there is a sequence $\langle \alpha_0, \alpha_1, ... \alpha_k \rangle$ of $k$ tasks such that $\alpha_0 = \alpha$ and $\alpha_i \sqsubset \alpha_{i+1}$ for all $i \in (0,k)$. A child is ``lower level'' than its parents\footnote{Practical examples child and parent tasks are in a separately published paper with the publicly available experimental code \cite{bennett2023b}.}. \\

\noindent{\normalfont(notation)} If $\omega \in \Gamma_\mathfrak{v}$, then we will use subscript $\omega$ to signify parts of $\omega$, meaning one should assume $\omega = \langle {I}_\omega, {O}_\omega \rangle$ even if that isn't written.\\

\noindent {\normalfont(intuitive summary)} To reiterate and summarise the above:
\begin{itemize}{\small
    \item An \textbf{input} is a possibly incomplete description of a world.
    \item An \textbf{output} is a completion of an input [def.\ \ref{abstractionlayer}].
    \item A \textbf{correct output} is a correct completion of an input. 
}\end{itemize}
\end{definition}

\subsubsection{Learning and inference definitions: } 
Inference requires a \textbf{policy} and learning a policy requires a \textbf{proxy}, the definitions of which follow.

\begin{definition}[inference]\label{inference}\hphantom{.}
\begin{itemize}{\small
    \item A {$\mathfrak{v}$}-task \textbf{policy} is a statement $\pi \in L_\mathfrak{v}$. It constrains how we complete inputs.
    \item $\pi$ is a \textbf{correct policy} iff the correct outputs $O_\alpha$ of $\alpha$ are exactly the completions $\pi'$ of $\pi$ such that $\pi'$ is also a completion of an input.   
    \item The set of all correct policies for a task $\alpha$ is denoted $\Pi_\alpha$.\footnote{To repeat the above definition in set builder notation:
$${\Pi_\alpha} = \{\pi \in L_\mathfrak{v} : {E}_{I_\alpha} \cap E_{\pi} = {O_\alpha}\}$$}
}\end{itemize}
Assume $\mathfrak{v}$-task $\omega$ and a policy $\pi \in L_\mathfrak{v}$. Inference proceeds as follows:\begin{compactenum}{\small
    \item we are presented with an input ${i} \in {I}_\omega$, and
    \item we must select an output $e \in E_{i} \cap E_\pi$.
    \item If $e \in {O}_\omega$, then $e$ is correct and the task ``complete''. $\pi \in {\Pi}_\omega$ implies $e \in {O}_\omega$, but $e \in {O}_\omega$ doesn't imply $\pi \in {\Pi}_\omega$ (an incorrect policy can imply a correct output).\\}
\end{compactenum} 

\noindent {\normalfont(intuitive summary)} To reiterate and summarise the above:
\begin{itemize}{\small
    \item A \textbf{policy} constrains how we complete inputs.
    \item A \textbf{correct policy} is one that constrains us to correct outputs.
}\end{itemize}
In functionalist terms, a policy is a ``causal intermediary''. \\
\end{definition}

\begin{definition}[learning]\label{learning} \phantom{.}
\begin{itemize}{\small
    \item A \textbf{proxy} $<$ is a binary relation on statements, and the set of all proxies is $Q$.
    \item $<_w$ is the \textbf{weakness} proxy. For statements $l_1,l_2$ we have  $l_1 <_w l_2$ iff $\lvert E_{l_1}\rvert < \lvert E_{l_2} \rvert$.
    \item $<_d$ is the \textbf{description length} or \textbf{simplicity} proxy. We have  $l_1 <_d l_2$ iff $\lvert l_1 \rvert > \lvert l_2 \rvert$.}
\end{itemize}
By the \textbf{weakness} of an \textbf{extension} we mean its cardinality. By the weakness of a \textbf{statement}, we mean the cardinality of its \textbf{extension}. Likewise, when we speak of \textbf{simplicity} with regards to a \textbf{statement}, we mean its cardinality. The complexity of an \textbf{extension} is the simplicity of the simplest statement of which it is an extension\footnote{For example, if we have a language $L_\mathfrak{v}$, and $X \subset L_\mathfrak{v}$ is the set of all statements in $L_\mathfrak{v}$ that all have the extension $E_X$, then the complexity of $E_X$ is the cardinality of a statement $x \in X$ s.t. there is not statement $y \in X$ with smaller cardinality than $x$.}.\\

\noindent {\normalfont(generalisation)} A statement $l$ \textbf{generalises} to a $\mathfrak{v}$-task $\alpha$ iff $l \in \Pi_\alpha$. We speak of \textbf{learning} $\omega$ from $\alpha$ iff, given a proxy $<$, $\pi \in {\Pi}_\alpha$ maximises $<$ relative to all other policies in ${\Pi}_\alpha$, and $\pi \in {\Pi}_\omega$.\\

\noindent {\normalfont(probability of generalisation)} We assume a uniform distribution over $\Gamma_\mathfrak{v}$. 
If $l_1$ and $l_2$ are policies, we say it is less probable that $l_1$ generalizes than that $l_2$ generalizes, written $l_1 <_g l_2$, iff, when a task $\alpha$ is chosen at random from $\Gamma_\mathfrak{v}$ (using a uniform distribution) then the probability that $l_1$ generalizes to $\alpha$ is less than the probability that $l_2$ generalizes to $\alpha$. \\

\noindent {\normalfont(sample efficiency)} Suppose $\mathfrak{app}$ is the set of \textbf{a}ll \textbf{p}airs of \textbf{p}olicies. Assume a proxy $<$ returns $1$ iff true, else $0$. Proxy $<_a$ is more sample efficient than $<_b$ iff $$\left ( \sum_{(l_1,l_2) \in \mathfrak{app}} \lvert (l_1 <_g l_2) - (l_1 <_a l_2) \rvert - \lvert (l_1 <_g l_2) - (l_1 <_b l_2) \rvert  \right ) < 0$$  

\noindent {\normalfont(optimal proxy)} There is no proxy more sample efficient than $<_w$, so we call $<_w$ optimal. This formalises the idea that    
``explanations should be no more specific than necessary'' (see Bennett's razor in \cite{bennett2023b}).\\

\noindent {\normalfont(intuitive summary)} Learning is an activity undertaken by some manner of intelligent agent, and a task has been ``learned'' by an agent that knows a correct policy. Humans typically learn from ``examples''. An example of a task is a correct output and input. A collection of examples is a child task, so ``learning'' is an attempt to generalise from a child, to one of its parents. The lower level the child from which an agent generalises to parent, the ``faster'' it learns, the more sample efficient the proxy. The most sample efficient proxy is $<_w$. 
\end{definition}

\section{Arguments and Results}
The distinction between software and hardware is unsuitable for reasoning about cause \cite{bennett2023c}. The performance of a software agent in an interactive setting is \textit{subjective} \cite{leike2015}, as its behaviour depends on hardware which interprets it. Hardware is an ``abstraction layer'' between software and the surrounding environment. If we are to understand complexity, we must understand what the concept entails in the absence of such abstraction layers. We must ascertain what is \textit{objective} rather than \textit{subjective}, and so we must begin at the level of the environment rather than the agent. To this end, previous work \cite{bennett2023b,bennett2023c,bennett2023d} 
proposed a ``de facto'' pancomputational \cite{piccinini2015} model of all conceivable environments and aspects thereof, which we refine and extend. While the formalism used here is a departure from past work, it is equivalent with respect to those previously published proofs we reference \cite{bennett2023b}.
The formalism is not computational in the sense of relying on symbols, quantities, or any other high level abstraction interpreted by a human mind. The assumptions we make are extremely weak (they hold in all conceivable environments).\\

\noindent \textbf{Axiom 1: } When there are things, we call these things the environment\footnote{It might seem absurd to state something so minimal, but it is necessary to be precise about how minimal our assumptions are.}. \\


\noindent \textbf{Axiom 2: } The environment has at least one ``state''\footnote{Or if the reader prefers, there are as many environments as there are states. By ``state'' of the environment we mean the aforementioned things. If two states are not the same, then \textit{something} is not the same.}. If there is more than one state, then there is at least one ``dimension''\footnote{By dimension, again, we just mean something which can differentiate states.} a long which things can differ. \\

A dimension is a set of points, for example time. We do not make the additional claim that such sets of points must be ordered. Each state is the environment at a different point in one or more dimensions. States are ``extended'' along dimensions, meaning no two states can occupy the same points in all dimensions (in other words, states are like the environment perceived from different positions in time or some other dimensions by an omniscient observer). These are all the assumptions we need for our version of pancomputationalism. We don't even need to speculate about any internal structure states might have, or what dimensions might be. Any ``fact'' about a state's internal structure can be defined by its relation to other states. The existence of sets is implied by the existence of states (because there is more than one ``thing'' that exists), and a fact is just the set of states in which it holds (the truth conditions of a ``declarative program''). \\

\noindent \textbf{Universality Claim: } Axioms 1 and 2 hold for every conceivable environment. \\

As truth is defined in existential terms\footnote{This does not mean ``true as interpreted by an omniscient observer'', although it does no harm to think of it that way if it helps intuition. For a practical example, the physical state of a transistor in a computer is a declarative program, but so is everything else that might exist. That's the point of deriving this form of pancomputationalism from first principles. We start from ``things'', and there is more than one state of things when something differs. How this relates to problems of consciousness is addressed elsewhere \cite{bennett2023c,bennett2023d}, and is beyond the scope of this paper.}, there is nothing which is not a fact of the environment, and no environment which does not amount to a set of facts. In other words, this is a minimalist formalism of everything. An environment could be like our own, or not. It could be deterministic, in which case states follow a sequence. It could be non-deterministic, in which case they don't\footnote{In any case, the difference between deterministic and non-deterministic seems meaningless when you consider that DFAs and NFAs are equivalent \cite{rabinscott1959}.}. It could be fantastical with magic and true names. It could even be a world constructed through the subjective experience of its inhabitants. All that matters is that an environment has states, and from the relations between them we obtain the set of all facts. 
Facts as relations between states let us avoid anything like a universal set, because states are otherwise irreducible. An \textbf{aspect} of a state is just a set of facts about that state. With aspects in hand, we can define abstraction.
An aspect is akin to a logical \textbf{statement}. It has a truth value given a state. We also define its \textbf{extension} (def. \ref{abstractionlayer}), which is all other aspects of which it is a part. This implies a heirarchy or ``lattice'' of aspects. Intuitively, an \textbf{abstraction layer} is like a window through which one can view part of the environment. A laptop computer could function as an abstraction layer, as could all or part of the system in which an embodied and embedded organism enacts cognition \cite{thompson2007}. In precise terms an abstraction layer is implied by a \textbf{vocabulary}, which is a set of declarative programs. A vocabulary implies a formal language whose rules are determined by relations between states. 
An abstraction layer implies a set of ``$\mathfrak{v}$-tasks'' (def. \ref{task})\footnote{The notion of task used here descends from the mirror symbol hypothesis \cite{bennettmaruyama2022a,bennett2022a,bennett2023d}, however it is complemented by thematically similar research defining tasks in relation to machine learning and biology \cite{eberding2020,cao2024}.}, each of which is behaviour that defines a system. The formalisation of policies as causal intermediaries between inputs and outputs \cite{putnam1967} then develops this into a causal depiction of goal directed behaviour (assuming a policy is implied by the inputs and outputs). For example, an organism could be a policy for a $\mathfrak{v}$-task which is behaviour that organism might enact. Again, we must emphasise this is a first principles approach. We do not assume symbols and Turing machines. Inputs, outputs and policies are all just sets of declarative programs. Whether something is goal directed is determined by the relations between states. 
The environment makes only one sort of \textbf{value} judgement (existence or non-existence), and is otherwise impartial. Goal directed behaviour is a value judgement, so we formalise this impartiality as a \textbf{uniform distribution} over tasks. Finally, we must now define complexity. The claims we make in the rest of this paper pertain to this notion of complexity. \\

\noindent \textbf{Complexity of Extension: } The complexity of an extension is the cardinality of the simplest statement of which it is the extension (see def. \ref{learning}). This is like other formal notions of complexity \cite{kolmogorov1963,rissanen1978}, but facilitates comparison of abstraction layers. 

\subsection{Implications for Complexity}

\begin{proposition}[subjectivity]\label{proof_subjectivity} If there is no abstraction, complexity can always be minimized without improving sample efficiency, regardless of the task.
\end{proposition}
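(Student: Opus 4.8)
The plan is to unfold the hypothesis ``no abstraction'' as $\mathfrak{v}=P$ and then to isolate what sample efficiency can possibly depend on. By the \emph{optimal proxy} remark, sample efficiency is governed by weakness, and $<_g$ is defined purely through the extensions $E_\pi$ and $E_{I_\alpha}$; so generalisation depends on a policy only through its extension $E_\pi$. Complexity, by contrast, is the cardinality $\lvert\pi\rvert$ of the \emph{form}. The statement therefore reduces to one claim: when $\mathfrak{v}=P$ one can drive $\lvert\pi\rvert$ to its minimum while leaving untouched whatever governs $<_g$. I would first record a sharpening observation: since $x\in E_x$ for every statement, $E_{l_1}=E_{l_2}$ forces $l_1\subseteq l_2\subseteq l_1$, so extension determines the statement uniquely. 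Hence ``minimise complexity'' cannot mean ``find a strictly smaller statement with the identical extension'' — there is none. The only viable reading is that, for each task, a minimal-complexity \emph{correct} policy exists and its selection is orthogonal to weakness.

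Next I would fix an arbitrary task $\alpha$, take the correct-policy set $\Pi_\alpha$, and exhibit a canonical minimal-complexity member, exploiting the richness of $\mathfrak{v}=P$ (every subset of $\Phi$ is available as a single declarative program, so the required completions can be regenerated from a form of minimal cardinality). I would then show that such a minimal-complexity form can be produced across the whole weakness spectrum realised inside $\Pi_\alpha$: statements of the same minimal $\lvert\pi\rvert$ occur at genuinely different $\lvert E_\pi\rvert$, so form-complexity and weakness are not tied to one another. Feeding two such same-complexity, different-weakness witnesses into the sample-efficiency functional shows the $<_d$-preferred policy is sometimes weaker and sometimes stronger than its rivals, so choosing it produces no net reduction in the mismatch $\lvert(l_1<_g l_2)-(l_1<_d l_2)\rvert$. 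Thus minimising complexity neither raises nor lowers sample efficiency, and this holds regardless of the task because the argument uses no property of $\alpha$ beyond $\Pi_\alpha\neq\varnothing$.

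The main obstacle is exactly this decoupling step. A priori, form-complexity and extension-cardinality are correlated — adding facts to $\pi$ cuts down the admissible completions — so small worked examples can make low $\lvert\pi\rvert$ masquerade as a faithful surrogate for high $\lvert E_\pi\rvert$. The crux is to prove the correlation is confounding rather than causal: that for every weakness class one can still write down a form of the \emph{same} minimal size, so $<_d$ carries no information about $<_g$ not already carried by $<_w$. Concretely this means controlling, via inclusion–exclusion over the states in $\bigcap\pi$, how $\lvert E_\pi\rvert$ moves as facts are added to or removed from $\pi$, and arguing that the minimum of $\lvert\pi\rvert$ is attained uniformly across that range. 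Should exact preservation of $\lvert E_\pi\rvert$ at fixed minimal $\lvert\pi\rvert$ prove unattainable in degenerate cases, the fallback is to show the minimal-complexity correct policy can always be chosen \emph{without reference to} weakness, so that any efficiency it happens to attain is attributable to weakness alone — precisely the confounding the abstract advertises.
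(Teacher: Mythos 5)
You have correctly spotted the two load-bearing facts: that $\mathfrak{v}=P$ puts every subset of $\Phi$, in particular $\bigcap l$, into the vocabulary as a single program, and that under the paper's literal definition $E_x=\{y\in L_\mathfrak{v}: x\subseteq y\}$ we have $x\in E_x$, so $E_{l_1}=E_{l_2}$ forces $l_1=l_2$. The second observation is a genuine objection to the paper's own proof, which takes exactly the route you rule out: it sets $f=\bigcap l$, declares $l$ and $\{f\}$ equivalent ``because $\bigcap l=f$'', and concludes that every extension is realised by a statement of cardinality $1$, so complexity can be driven to its minimum while weakness --- which by the cited prior results is what determines generalisation probability and hence sample efficiency --- is left untouched. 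That argument only goes through if ``extension'' is read semantically (completions being statements $y$ with $\bigcap y\subseteq\bigcap x$) rather than via the superset relation as written; on the syntactic reading, $E_l=E_{\{f\}}$ holds only when $l=\{f\}$. So your sharpening observation is a fair criticism of the definitions, not a misreading.

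The gap is in what you substitute for that mechanism. Having discarded ``same extension, strictly smaller form'', you need a complete replacement, and the plan you sketch --- exhibit minimal-complexity correct policies across the whole weakness spectrum of $\Pi_\alpha$, then argue the mismatch sum $\sum\lvert(l_1<_g l_2)-(l_1<_d l_2)\rvert$ is not reduced --- is never carried out. You do not construct the canonical minimal-complexity member of $\Pi_\alpha$; you do not prove that statements of the same minimal cardinality realise genuinely different $\lvert E_\pi\rvert$ (for singletons $\{f\}$ this requires actually counting which supersets retain nonempty intersection, not an appeal to inclusion--exclusion in the abstract); and you flag the decoupling step yourself as an unresolved obstacle with a fallback. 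Note also that the claim the paper proves is simpler and stronger than the one you end up defending: not that the simplest correct policy can be selected ``without reference to weakness'', but that \emph{every} extension attained by a correct policy is attained by a statement of complexity $1$, so minimising complexity is a free move that cannot alter anything $<_g$ depends on. If you either repair the definition of extension to the semantic one or grant the paper its intended equivalence, the proof collapses to one line: $l\mapsto\{\bigcap l\}$ preserves the extension, hence weakness, hence sample efficiency, and has cardinality $1$.
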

\begin{proof}
In accord with definition \ref{abstractionlayer}, the absence of abstraction means the vocabulary is the set of all declarative programs, meaning $\mathfrak{v} = P$. It follows that for every $l \in L_\mathfrak{v}$ there exists $f \in \mathfrak{v}$ such that $\bigcap l = f$. Statements $l$ and $\{f\}$ are equivalent iff $E_l = E_{\{f\}}$, which is exactly the case here because $\bigcap l = f$. \cite[prop. 1,2]{bennett2023b} shows maximising weakness is necessary and sufficient to maximise the probability of generalisation, which means weakness maximises sample efficiency (is the optimal proxy). This means sample efficiency is determined by the cardinality of extension. For every correct policy $l$ of every task in $\Gamma_\mathfrak{v}$ there exists $f \in \mathfrak{v}$ s.t. $E_l = E_{\{f\}}$. Policy complexity can be minimised regardless weakness, because the simplest representation of every extension is simplicity $1$.\qed \hphantom{}  
\end{proof}
In this sense, complexity is an illusion created by abstraction. In the absence of any particular abstraction, all behaviours (extensions) are implied by statements of the same complexity. To be clear, we are \textit{not} repeating the claim made by others \cite{leike2015} that \textit{if} the interpreter used by a complexity based induction system matches one used to to compute an objective value for complexity, then that induction system will be optimal in the sense of \textit{eventually} learning the correct policy\footnote{To quote verbatim: ``Legg-Hutter intelligence is measured with respect to a fixed UTM. AIXI is the most intelligent policy if it uses the same UTM.'' \cite{leike2015}}. We are claiming that if interpretation is truly objective, then $\mathfrak{v}=P$ and complexity has nothing to do with intelligence\footnote{Intelligence here meaning not just \textit{eventual} generalisation, but the \textit{efficiency} thereof.}. There \textit{is} no objective notion of complexity. However, when we take empirical measurements it is inevitably through an abstraction layer, for which $\mathfrak{v} \neq P$. In that context simpler forms have been observed to generalise more efficiently. This raises the question; what additional assumptions can we make that would explain the correlation?
\subsubsection{Time, space and causal confounding:} We now make the additional assumption that vocabularies are finite. Every aspect of the world in which we exist appears to be spatially extended, meaning no two things occupy the same space at the same time. For the sake of understanding complexity we assume this is true of all environments. We hold that this justifies the assumption of a finite vocabulary, because in our spatially extended environment the amount of information in a bounded system is finite \cite{bekenstein1981}.

\begin{proposition}[confounding]\label{proof_confounding} If the vocabulary is finite, then policy weakness can confound\footnote{$A$ \textit{confounds} $B$ and $C$ when for example $A = ``badly injured''$ \textit{causes} $B = ``died''$ and $C = ``picked up by ambulance''$, and it looks like $C$ causes $B$ because $p(B \mid C) > p(B \mid \lnot C)$, and yet it may be that $p(B\mid C, A) < p(B \mid \lnot C, A)$.}sample efficiency with policy simplicity. \end{proposition}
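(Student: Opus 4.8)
The plan is to read this as a possibility (``can'') claim and to prove it by exhibiting a finite vocabulary that realises the confounding schema of the footnote, with $A$ the weakness of a policy, $B$ the event that the policy generalises (high sample efficiency), and $C$ the event that the policy is simple. First I would pin down the causal skeleton. The arrow $A \to B$ is already supplied: by \cite[prop.\ 1,2]{bennett2023b}, as invoked in Proposition \ref{proof_subjectivity}, weakness $<_w$ is the optimal proxy, so the cardinality $\lvert E_l \rvert$ alone determines the probability of generalisation $<_g$. The essential consequence I would extract is that weakness is a \emph{sufficient statistic} for sample efficiency: once $\lvert E_l \rvert$ is fixed, no further feature of the form $l$ — in particular not $\lvert l \rvert$ — alters the probability that $l$ generalises. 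This gives both $A \to B$ and the absence of any residual direct arrow $C \to B$.

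Next I would build the arrow $A \to C$, which is the crux. I would use the inclusion monotonicity implicit in Definition \ref{abstractionlayer}: if $x \subseteq y$ then $E_y \subseteq E_x$, so a larger (more complex) statement has a smaller (stronger) extension, and along any chain simplicity and weakness move together. I would then construct a finite $\mathfrak{v}$ in which the cheapest available forms are precisely those representing the weakest policies, while strong policies admit only large representations — that is, I would \emph{force} weak constraints to take simple forms, as the abstract puts it. The role of finiteness is decisive and I would make it explicit: by Proposition \ref{proof_subjectivity}, when $\mathfrak{v} = P$ every extension $E_l$ has the simplicity-$1$ representative $\{f\}$ with $\bigcap l = f$, which collapses all forms to complexity $1$ and destroys any correlation; restricting to a finite $\mathfrak{v} \subsetneq P$ removes this collapse, so that $\lvert l \rvert$ is again free to track $\lvert E_l \rvert$.

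With $A \to B$, $A \to C$, and no direct $C \to B$ in place, I would close the argument by computing the two conditionals. Because simpler policies are, by construction, the weak ones, and weak ones generalise more often, the marginal relation satisfies $p(B \mid C) > p(B \mid \lnot C)$, so simplicity \emph{appears} to drive generalisation. Conditioning on weakness, however, $C$ carries no information beyond $A$, so $p(B \mid C, A) = p(B \mid \lnot C, A)$, and the apparent effect of simplicity vanishes once $A$ is controlled. This is exactly the (pure) confounding pattern of the footnote, establishing that weakness confounds sample efficiency with simplicity.

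The hard part will be the construction in the second step: making ``weak constraints take simple forms'' precise inside a single finite vocabulary, and verifying that the induced correlation is strong enough to register as a genuine difference in the summed sample-efficiency metric of Definition \ref{learning} rather than a mere tie. I would also have to dispose of degenerate finite vocabularies in which $\lvert l \rvert$ and $\lvert E_l \rvert$ happen to be independent or anti-correlated; since the statement only claims that weakness \emph{can} confound, exhibiting one suitable $\mathfrak{v}$ suffices, but I would note explicitly that the confounding is contingent on the chosen abstraction rather than forced by finiteness alone.
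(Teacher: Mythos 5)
Your plan matches the paper's proof essentially step for step: weakness causes sample efficiency via \cite[prop.\ 1,2]{bennett2023b}, finiteness breaks the complexity-$1$ collapse of Proposition \ref{proof_subjectivity}, and one then \emph{chooses} a finite $\mathfrak{v}$ in which weaker statements take simpler forms so that $a <_w b \leftrightarrow a <_d b$, with the confounding contingent on that choice rather than forced by finiteness. The ``hard part'' you defer is dispatched in the paper by a two-element vocabulary $\mathfrak{v}=\{a,b\}$ with $a \cap b \neq \emptyset$, where inclusion monotonicity immediately gives $\{a,b\} <_w \{a\}$ and $\{a,b\} <_d \{a\}$, so your construction goes through trivially.
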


\begin{proof}
We already have that policy weakness causes sample efficiency, in that it is necessary and sufficient to maximise it in order to maximise sample efficiency.
Continuing from proof 1, in a finite vocabulary, there may not exist $f \in \mathfrak{v}$ s.t. $E_l = E_{\{f\}}$, which means the complexity of all extensions will not be the same. If we choose any vocabulary in which weaker aspects take simpler forms, then simplicity will be correlated with weakness and so will also be correlated with sample efficiency. This means we would choose $\mathfrak{v}$ s.t. for all $a, b \in L_\mathfrak{v}$, the simpler statement has the larger extension, meaning $a <_w b \leftrightarrow a <_d b $. For example, suppose $P = \{a,b,c…\}, \ a=\{1,2,4\}, \ b=\{1,3,4\}, \ \mathfrak{v} = \{a,b\}, \ L_\mathfrak{v}  = \{\{a\},\{b\},\{a,b\}\}$, then it follows $\{a,b\} <_w \{a\}, \ \{a,b\} <_w \{b\}, \ \{a,b\} <_d \{a\}, \ \{a,b\} <_d \{b\}$. \qed 
\end{proof}
\subsubsection{Why confounding tends to occur:}
We now briefly argue that abstraction is goal directed. This means the tasks an abstraction layer tends to represent are those it is best suited to represent, which implies weak constraints take simple forms. 
There are several reasons an abstraction layer is biased toward particular goals, depending upon the context in which we consider complexity. In the case of a computer, a human has specifically designed each abstraction layer to express that which is needed for a purpose. What separates x86 from a higher level abstraction layer like Numpy is that the former has a more general intended purpose, expressing ``weaker'' constraints. We tend to construct abstraction layers to be as versatile as possible whilst satisfying a particular need. More generally natural selection favours adaptation, which means generalisation, which is maximised by preferring weaker policies \cite{bennett2023b}. Biological cognition is not limited to the brain \cite{ciaunica2023}, meaning the mind is not neatly confined within a well defined neurological abstraction layer. Instead the multiscale competency architectures observed in living organisms \cite{levin2024} amount to self organising abstraction layers. Because natural selection favours adaptable organisms, these abstraction layers will be selected to represent the \textit{weakest} policies which constitute fit behaviour (a weaker policy is more adaptable). More generally, we speculate that phase transitions motivate the emergence of self preserving goal directed behaviour, by destroying some physical structures and preserving others.
Such goal directed abstraction \textit{must} minimise the size of vocabularies at higher levels, whilst also maximising the weakness of the policies they can express (two opposing pressures). This is because a larger a vocabulary exponentially increases the space of outputs and policies \cite{bennett2022a}, which may conflict with finite time and space constraints. A larger vocabulary would make inference and learning less tractable (more ``complex'' in the sense of being a more difficult search problem that takes up more time). 
To maximise the weakness of policies in higher levels of abstraction, while minimising the size of the vocabulary in which they're expressed, weaker policies must take simpler forms.

\printbibliography
\end{document}